\documentclass{IOS-Book-Article}

\usepackage{mathptmx}
\usepackage{soul}\setuldepth{article}

%
\def\hb{\hbox to 11.5 cm{}}


\usepackage{latexsym}
\usepackage{amssymb}
\usepackage{amsmath}
\usepackage{amsthm}
\usepackage{booktabs}
\usepackage{enumitem}
\usepackage{graphicx}
\usepackage{color}

\usepackage{xcolor}
\usepackage{tree-dvips}
\usepackage{qtree}
 \usepackage{float}

\usepackage{stmaryrd}
\usepackage{wrapfig}
\usepackage{cutwin}
\usepackage{lipsum}
 \usepackage{orcidlink}
 


\newtheorem{theorem}{Theorem}

\newtheorem{proposition}[theorem]{Proposition}

\newtheorem{definition}{Definition}

\newenvironment{varitemize}
{
\begin{list}{\labelitemi}
{\setlength{\itemsep}{0pt}
 \setlength{\topsep}{1.5pt}
 \setlength{\parsep}{0pt}
 \setlength{\partopsep}{0pt}
 \setlength{\leftmargin}{8.5 pt}
 \setlength{\rightmargin}{0pt}
 \setlength{\itemindent}{0pt}
 \setlength{\labelsep}{5pt}
 \setlength{\labelwidth}{10pt}
}}
{
 \end{list}
}



\newcommand{\BibTeX}{B\kern-.05em{\sc i\kern-.025em b}\kern-.08em\TeX}

\newcommand{\defin}{=_{\textit{def}} }

\newcommand{\atm}{\mathit{Atm}}

\newcommand{\val}{\mathit{Val}}

\renewcommand{\phi}{\varphi}





\newcommand{\putaway}[1]{}

\newcommand{\plt}{\mathit{Plt}}
\newcommand{\dfd}{\mathit{Dfd}}


\newcommand{\relevance}{\mathcal{R}}


\newcommand{\relHierarchy}{\mathcal{H}}
\newcommand{\relBinding}{\mathcal{B}}

\newcommand{\temporalorder}{\leq_{T}}
\newcommand{\stricttemporalorder}{<_{T}}

\newcommand{\court}{\mathit{c}}

\newcommand{\Facts}{\mathtt{Facts}}
\newcommand{\Courts}{\mathtt{Courts}}
\newcommand{\Org}{\mathit{Org}}

\newcommand{\outcome}{\mathit{o}}




%


\newtheorem{example}{Example}

\begin{document}

\pagestyle{headings}
\def\thepage{}
\begin{frontmatter}              

\title{When Precedents Clash}
\runningtitle{When Precedents Clash }

\author[A,C]{\fnms{Cecilia} \snm{Di Florio}\orcidlink{0000-0002-8927-7414}%
\thanks{Corresponding Author: Cecilia Di Florio, cecilia.diflorio2@unibo.it}},
\author[B]{\fnms{Huimin} \snm{Dong}\orcidlink{0000-0002-4951-0111}} 
and
\author[A]{\fnms{Antonino} \snm{Rotolo}\orcidlink{0000-0001-5265-0660}}

\runningauthor{C. Di Florio, H. Dong and A. Rotolo}
\address[A]{University of Bologna, \textsuperscript{b }TU Wien, \textsuperscript{c}University \:of\: Luxembourg}  

\begin{abstract}
Consistency of case bases is a way to avoid the problem of retrieving conflicting constraining precedents for new cases to be decided. However, in legal practice the consistency requirements for cases bases may not be satisfied. As pointed out in \cite{Broughton}, a model of precedential constraint should take into account the hierarchical structure of the specific legal system under consideration and the temporal dimension of cases. This article continues the research initiated in \cite{liu2022modelling, lail}, which established a connection between Boolean classifiers and legal case-based reasoning. On this basis, we enrich the classifier models with an organisational structure that takes into account both the hierarchy of courts and which courts issue decisions that are binding/constraining on subsequent cases. We focus on common law systems. We also introduce a temporal relation between cases. Within this enriched framework, we can formalise the notions of overruled cases and cases decided \emph{per incuriam}: such cases are not to be considered binding on later cases. Finally, we show under which condition principles based on the hierarchical structure and on the temporal dimension can provide an unambiguous decision-making process for new cases in the presence of conflicting binding precedents.
\end{abstract}

\begin{keyword}
legal case-based reasoning\sep classifier model\sep
 conflict of precedents
\end{keyword}
\end{frontmatter}

\vspace{-15pt}
\section{Introduction}

The use of machine  learning (ML) to predict the outcomes of legal procedures is widely discussed in the literature, by legal experts and by policymakers (see, e.g., \cite{Gan_Kuang_Yang_Wu_2021,Medvedeva2020-MEDUML,BexP21,ATKINSON2020103387}). Specifically, some concerns are raised by  judges regarding the use of AI in the courts. Firstly, AI can undermine the independent exercise of judicial power. Moreover, it is not established at all that the outcomes of judicial algorithmic decision predictors are normatively correct, accurate and robust. Finally, ML decision making is far from  being transparent and explainable.
To address these concerns, symbolic methods are needed to formally verify the robustness of machine learning algorithms used in predictive justice.  ML algorithms in predictive justice provide the outcome of new cases  on the basis of previous cases; in this sense they perform \emph{case base reasoning} (CBR). And indeed, in \cite{liu2022modelling}, a correspondence between formal  models of legal CBR and binary input classifier models (CMs) \cite{LiuLoriniJLC, LiuLorini2021BCL} has been established. In particular, the correspondence had been identified between specific CMs and the \emph{reason} and \emph{result model} proposed by Horty \cite{Horty2011RR}. Horty's models aim to formalise a founding concept of the common law, that of  \emph{precedential constraint}, by  defining the conditions whereby the decision of a new case is forced by a body of previous cases.  However, in these models the consistency of the starting case base is assumed, i.e. it is required that no previous case has already violated the precedential constraint; the consequence of such an assumption is that no conflict of precedents can be found in deciding a new case. The consistency requirement can be unrealistic and not met by real legal case bases, as emphasized in \cite{Canavotto22}.  For this reason, in \cite{FlorioLLRS23}, we expanded the scope of the use of CMs with respect to \cite{liu2022modelling}  and have  abandoned the consistency assumption. To this extent, we introduced  a preference order among cases, whereby in the event of conflict between precedents the  preferred precedent should be followed. This solution seemed reasonable to us, since the preference order may result, for example, from the hierarchy of courts issuing the conflicting decisions and/or temporal dimension of cases, i.e., when the cases were decided.   The need to consider  these two elements, in order to model a precedential constraint that is more  adherent to legal practice (at least for common law systems) was already highlighted in  \cite{Broughton}. There, besides a horizontal precedential constraint, a tighter vertical precedential constraint, depending on court hierarchiy and based on the so called strict model is introduced.

In this paper, we pursue the intuition that legal case-based reasoners are nothing but binary classifiers, and we address the need to consider both the hierarchical and time element from the perspective of CMs.  The paper is structured as follows. Section 2 enriches the CMs with three components: 1) an organization, specifying the hierarchy of courts and which courts issue binding decisions with respect to other courts; 2) a preorder, which determines the temporal relation between cases; and 3)  a relevance relation among cases. On this basis, Section 3 formally defines the notion of precedent and binding precedent. The latter is a potentially constraining case for later decisions,  in the sense that it is  susceptible to two possible  exceptions: it can be  overruled by a later decision of a court which is entitled to do so, or it can by declared assessed \emph{per incuriam}. \emph{Per incuriam} cases are decided in the ignorance of a binding authority, i.e., in the ignorance of a binding precedent. In our framework we can define both notions of overruled and \emph{per incuriam} precedents.    
Taking into account only binding precedents without exceptions, Section 4 formalizes a temporal and hierarchical principle for handling conflicting precedents and verifies when this principle defines an unambiguous decision process.

\section{Organization/Jurisdiction and Temporal Jurisdictional Classifier Models} \label{sec:basic}
In this section we instantiate the hierarchical and temporal dimensions  in the framework of CMs.  First, we define the notion of organisation, which characterises the  specific legal system. It specifies the courts, the hierarchical relation among them  and which courts issue binding decisions. As in \cite{Broughton}, we assume that court hierarchy  has a tree structure.

\begin{definition}[Organisation/jurisdiction]\label{def:organisation}
We call organisation/jurisdiction the triple $Org= (\Courts, \relHierarchy,\relBinding)$. Where: 1) $\Courts$ is a finite and non-empty set of courts;   2) $\relHierarchy,\relBinding\subseteq \Courts\times\Courts$; 3) $\relHierarchy$ is transitive and irreflexive\footnote{We adopt following definition  of irreflexivity: not $\court \relHierarchy \court$, for all  $\court\in\Courts$.  };  4) $\relHierarchy$ is ``tree like'', i.e. for all $\court_{i}, \court_{j}, \court_{k}\in \Courts$ if $ \court_{k} \relHierarchy \court_{i}$ and $\court_{j} \relHierarchy \court_{i}$  then $\court_{k}\relHierarchy \court_{j}$ or $\court_{j} \relHierarchy \court_{k}$ or $\court_{j}=\court_{k}$; 5) there is a root $c_0 \in \Courts$ such that for any $\court_{i}\in\Courts$, $\court_{0} \relHierarchy \court_{i}$.
\end{definition}

$ \court_{i} \relHierarchy \court_{j}$  is read as ``$\court_{i}$ is hierarchically higher than  $\court_{j}$'';  $ \court_{i} \relBinding\court_{j}$ is read as ``$\court_{i}$ issues binding decisions for  $\court_{j}$''. 
We did not specify the link between hierarchical and binding relations. Thus, different legal systems could be addressed.  However, in this work, we will focus  on \emph{common law systems}. Within common law systems, typically, 1) \emph{vertical stare decisis} applies, i.e. decisions of higher courts bind lower courts:
$ \text{ if } c_i\relHierarchy c_j \text{ then } c_i \relBinding c_j$;
2) lower courts have no binding power on higher courts and  two  not  hierarchically related courts don't issue binding decisions one for the other: $ \text{ if not } c_i\relHierarchy c_j \text{ then not } c_i \relBinding c_j$; 3) \emph{horizontal stare decisis} may apply, i.e. some $c_i$ may be self-bound, namely $c_i \relBinding c_i$. 
Ultimately, from 1), 2) and 3) follows \vspace{-8pt}
\begin{align*}\tag{SD}\label{eq:SD}
 \relHierarchy\subseteq \relBinding\subseteq \relHierarchy\cup \mathcal{I} \\[-25pt]
\end{align*}
where $\mathcal{I}$ is the identity relation on $\Courts$.

\begin{example}\label{Ex:Org}
We will take inspiration from the civil court system of England and Wales. In this system vertical stare decisis applies.  At the top of the courts hierarchy there is the Supreme Court of UK (which we will call $c_{0}$).  Officially, the Supreme Court is not bound by its own previous decisions, but it regards them as binding (although, the “Court may depart from them  when it appeared ‘right to do so'."\cite{Allermuir} We will return to this later). So, we  treat  the Supreme Court as self-bound.  Below the Supreme Court there is the Court of Appeal (which we will call $c_{1}$), which issues binding decisions for itself. Further down the hierarchy, there is the High Court ($c_{2}$). For simplicity, we will say that the High court is bound by its own  decisions.\footnote{Actually, some authorities suggest that High Court is self-bound  only if it acts as appellate court \cite{Graham}.}
Below the High court there are the County Courts  (about 170 \cite{CrimeAndCourtsAct}) that do not issue binding decisions. 
For simplicity, we will not consider all 170 County Courts but only two  of them ($c_{3}$ and $c_{4}$).
We model the courts system described  as follows.   We define the set of courts  $\Courts_{ex} =\{\court_{0}, \court_{1}, \court_{2}, \court_{3}, \court_{4}\}$ and the relation $\relHierarchy_{ex}\subseteq \Courts_{ex}\times\Courts_{ex}$, $\relHierarchy_{ex} = \{(c_i,c_j)\mid i<j, 0\leq i\leq 2, 1\leq j\leq 4\}$.
From what we said sofar, $c_{0}$, $c_{1}$ and $c_{2}$ are self-bound. Then, the binding relation is 
$\relBinding_{ex}=  \relHierarchy_{ex} \cup \{(\court_{0}, \court_{0}),(\court_{1}, \court_{1}),(\court_{2}, \court_{2}) \}$.   So, our organisation  is $Org_{ex}= (\Courts_{ex},\relHierarchy_{ex},\relBinding_{ex}).$
\end{example}
 The fundamental principle behind the doctine of \emph{stare decisis} is that a case at hand must be decided in the same way of a relevant and binding precedent.
However, decisions may sometimes be overruled. Overruling occurs when a court  decides  a case differently from a  relevant (binding) precedent \cite{interpreting, Rigoni_2014}. Not all courts  have overruling power. We will  assume that a higher court $c_i$ can always overrule a decision made by a lower court (such a precedent is not binding for $c_{i}$ but could be binding for another lower court). Moreover, we will assume that a court can overrule its own  precedents if certain conditions are met in the decision for the new case. The practice in the UK Supreme Court, for example, is to sit in a larger panel than usual when overruling its previous decisions \cite{MichaelRyle,interpreting}. 
Actually, we won't specify 
 which conditions must be met: in the description of each case we will simply add a parameter if in deciding the case the court can overrule its own precedents.  
 
We  introduce now the temporal and jurisdictional classifier models. 
We start by considering a \emph{finite} set of possible input values of the classifiers, $\atm_{0}$. 
In $\atm_{0}$ there are: variables describing the facts of the case, belonging to the finite set $\Facts$; the courts that can make the decision, belonging to the finite set $\Courts$; a parameter $l$ that, intuitively, means that the court  assessing the case at hand can overrule a previous decision made by the same court.  
So, $\atm_{0}= \Facts \cup \Courts \cup \{l\}$.
The classifier outputs have values  in $\val = \{1, 0, ?\}$ where elements stand for \emph{plaintiff wins}, \emph{defendant wins} and \emph{absence of decision} respectively. For $\outcome \in \{0, 1\}$, the ``opposite'' $\overline{\outcome}$ is noted for the value $1 - \outcome$.

\begin{definition}[Temp. Jur. Classifier Model]\label{def:classifiermodel}
    A temporal jurisdictional classifier model
    is defined as the tuple $C= (S, f, \Org, \temporalorder, \relevance)$. Where, $S\subseteq 2^{\atm_{0}} $ is a set of states, s.t. $\forall s \in S, \exists! \court \in \Courts$ s.t. $c\in s$; $f: S \longrightarrow \mathit{Val}$ is a  classification function; $ \Org$ is an organisation; $\temporalorder$ is a  total preorder on S \footnote{$\temporalorder$ is a transitive, reflexive relation such that forall $s, s'\in S$, $s\temporalorder s'$ or  $s'\temporalorder s$.} and  $\relevance\subseteq S\times S$ is a binary relation on $S$.     
\end{definition}

 Each state  of the model contains a factual situation together with a  unique court. In this sense, a state $s\in S$ represents a case presented to  a specific court $c\in\Courts$. In each state we can have an additional parameter $l$: if $l\in s$, we know that the court $c\in s$ can decide the case overruling a previous decision made by $c$ itself. 
 
 The classification function maps each state into a possible value, namely $\{ 0,1,?\}$. These values represent the decision associated to each case. So that, for each $s\in S$, we can have either that $f(s)=\outcome$,  with $\outcome\in\{0,1\}$ and so $s$ represents an assessed case; or we can have that $f(s)=?$, and so $s$  is an unassessed/new case.
 
 $Org$,  specifies the hierarchical and binding relations between the courts (see Def. \ref{def:organisation}). 
 
 $\temporalorder$ is a temporal preorder: $s\temporalorder s'$ is read as  “$s'$ was not assessed before $s$".   We  will say that $s$ is simultaneous to $s'$, noted $s=_{T}s'$,  iff $s\temporalorder s'$ and  $s'\temporalorder s$.  From  $\temporalorder$ we can retrieve the strict order $\stricttemporalorder$:  $ s\stricttemporalorder s'$ iff $ s \temporalorder s'$ and $s' \not \temporalorder s$. Then, $s \stricttemporalorder s'$ means “s was strictly decided before s'". 
 \color{black} We have chosen to use a preorder between the states, and not an order. In this way, we allow for two cases to be simultaneous. It could be  argued that, given two cases, it is always possible to determine which was decided first.  However, the temporal dimension  was mainly introduced to determine which precedents could be considered binding. We claim here that, given a case $s$, a relevant case $s'$ decided the same day or the same week cannot be considered binding: the court assessing $s$ would not have the possibility to  take $s'$ into account (just because, e.g., the reasons of $s'$ have not yet been released). In this sense we can consider two cases decided on the same day or in the same week as simultaneous, depending on the temporal granularity chosen. 
 
 $\relevance$ characterises the notion of \emph{relevance}: $s\relevance s'$ is read as ``$s$ is relevant for $s'$''. We  write $\relevance(s')= \{s\mid s \relevance s'\}$, for  the set of states that are relevant for $s'$. The relevance relation will be crucial in defining the notion of precedent. Roughly, a precedent for a case is a relevant previously assessed case. We don't impose any  specific notion of relevance on the model. Also, no specific property is required for the relevance relation. This allows us to take different notions of relevance into account. For example, we will show that  a relevance relation  can be defined  within the model, based on the  \emph{a fortiori reasoning }introduced in Horty's result model \cite{Horty2011RR}.
 
Henceforth, we will consider  \emph{complete} classifier models. These  models  take into account every possible combination of facts  and  courts; moreover, if a court $c$ is not self-bound, the classifier considers, among the states containing $c$, only those  containing also the parameter $l$ (since $c$ is not  self-bound, it can  overrule its own decisions). 
Finally, the temporal relation is such that new cases are always strictly after  already assessed cases. 

 \begin{definition}[Complete classifier]\label{def:completeClass}
 Let $\overline{2}^{\atm_{0}}\defin  \{s\in  2^{\atm_0}\mid l\in s$ if not $  c \relBinding c$, where $\court \in s \}$. Then, a  classifier model $C=(S, f, Org, \temporalorder, \relevance)$ is \emph{complete} iff $S = \overline{2}^{\atm_{0}}$ and $\forall s,s'\in S$ s.t. $f(s)\neq ?$ and $f(s')=?$, $ s<_{T}s'$.
\end{definition}

As mentioned above, we  can define a relevance relation   extracted  from the “a fortiori constraint" introduced  by Horty in the context of the  result model \cite{Horty2011RR}. 
We adapt notation  of \cite{liu2022modelling}. In the result model, the facts  are called factors. These  are “legally relevant fact patterns favouring one of the two opposing parties". Thus, we can have: factors for the plaintiff ($\plt$) and factors for the defendant ($\dfd$). So, $\Facts= \plt \uplus \dfd$.
We  use following notation: for $\outcome\in \{0,1\}$, $\Facts^{\outcome}= \plt$ iff $\outcome=1$,  $\Facts^{\outcome}= \dfd$  iff $\outcome=0$.

\begin{definition}[A fortiori-based relevance, $\relevance_{F}$] \label{def:hortyrelevance} 

Consider an assessed case, so  a state $s\in S$, such that $f(s)=\outcome$, $\outcome \in \{0,1\}$. Let $s'\in S$.  Then, based on a fortiori constraint,  $s$ is relevant for  $s'$,  namely $s\relevance_{F} s'$, iff $s \cap \Facts^{\outcome} \subseteq s'\cap\Facts^{\outcome} \text{ and }
s'\cap \Facts^{\overline{\outcome}} \subseteq s\cap\Facts^{\overline{\outcome}}$.
\end{definition}

Namely, suppose that $f(s)=1$. Then $s\relevance_{F} s'$, iff $s'$ has: a) at least the same factors in favour of $1$  that are in $s$, b) no more factors in favour of $0$ with respect to $s$.

\begin{example}[Running example]\label{runningEX1}
Consider $Org_{ex}$ of Example \ref{Ex:Org}. Let $Atm_{0}=\Facts\cup \{l\}\cup \Courts_{ex}$.  Suppose  $\Facts= \plt  \uplus  \dfd$, with $\plt=\{p,q,r\}$, $\dfd=\{t,v\}$.  
Let $s_{1}=\{p,t,v, c_{0}\}, s_{2}=\{p,r,t,v,c_{1}\} , s_{3}=\{p,r,t,v,l,c_{0}\},   s_{4}=\{p,t,v, c_{1}\}, s_{5}=\{p,t,q,c_{1}\}$. Let $f= \overline{2}^{\atm_{0}}\longrightarrow \{0,1, ?\}$ s.t $f(s_{1})=f(s_{2})=f(s_{4}) =1$, $f(s_{3})=f(s_{5})=0 $, and $f(s)=?$ otherwise. 
Consider the temporal preorder $\temporalorder$   on  $S$ s.t. 
 $s_{1}\stricttemporalorder s_{2} \stricttemporalorder s_{3}\stricttemporalorder s_ {4} \stricttemporalorder s_{5} \stricttemporalorder s'$, for any $s'$ s.t $f(s')=?$. 
 We  define the classifier model $C_{ex}= ( \overline{2}^{\atm_{0}}, f, \Org_{ex}, \temporalorder, \relevance_{F})$.  
 Consider the new  case $s^{*}=\{p,t, q,v, c_{2}\}$ to be decided by court $c_{2}$, i.e.  $f(s^{*})= ?$. It can be verified that, $\relevance_F(s^{*})= \{s_{1}, s_{4}, s_{5}\}$, $s_1, s_2\in \relevance_F(s_{s_3})$, $s_{3}\in  \relevance_F(s_{s_{4}})$, $s_{4}\in  \relevance_F(s_{s_{5}})$.
\end{example}

 $\relevance_{F}$  is neither reflexive, symmetric nor transitive. E.g. non reflexivity follows from the fact that, if $f(s)= ?$, then obviously $s\not\in \relevance_{F}(s)$, by definition. This is one reason why we have not imposed any properties on the relevance relation. But, $\relevance_{F}$  satisfies a variant of left-euclideanity that applies only to cases decided in opposite directions. 
\begin{proposition}\label{HortyLeft}
  Let $s, s', s''\in S$, $f(s)=\outcome\in\{0,1\}$. If  $f(s')=\overline{\outcome}$, $s\relevance_{F}s''$ and  $s' \relevance_{F} s''$, then $s\relevance_{F}s'$ and $s'\relevance_{F}s$.
\end{proposition}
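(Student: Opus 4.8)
The plan is to simply unfold Definition~\ref{def:hortyrelevance} at each of the four occurrences of $\relevance_{F}$ and observe that the required inclusions follow by transitivity of $\subseteq$. By the symmetry of the statement under interchanging the two truth values, it suffices to treat $\outcome=1$, so that $\Facts^{\outcome}=\plt$ and $\Facts^{\overline{\outcome}}=\dfd$; the case $\outcome=0$ is obtained by swapping the roles of $\plt$ and $\dfd$ throughout.

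First I would record what the two hypotheses give. Since $f(s)=1$, the assumption $s\relevance_{F}s''$ unfolds to $s\cap\plt\subseteq s''\cap\plt$ and $s''\cap\dfd\subseteq s\cap\dfd$. Since $f(s')=0=\overline{\outcome}$, the assumption $s'\relevance_{F}s''$ unfolds to $s'\cap\dfd\subseteq s''\cap\dfd$ and $s''\cap\plt\subseteq s'\cap\plt$. Chaining the plaintiff-side inclusions yields $s\cap\plt\subseteq s''\cap\plt\subseteq s'\cap\plt$, and chaining the defendant-side inclusions yields $s'\cap\dfd\subseteq s''\cap\dfd\subseteq s\cap\dfd$. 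So we arrive at the two containments $s\cap\plt\subseteq s'\cap\plt$ and $s'\cap\dfd\subseteq s\cap\dfd$.

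It then remains to read these off as the two claimed relevance facts. For $s\relevance_{F}s'$, since $f(s)=1$, Definition~\ref{def:hortyrelevance} requires exactly $s\cap\plt\subseteq s'\cap\plt$ and $s'\cap\dfd\subseteq s\cap\dfd$, which are the two containments just established. For $s'\relevance_{F}s$, since $f(s')=0$, Definition~\ref{def:hortyrelevance} requires $s'\cap\dfd\subseteq s\cap\dfd$ and $s\cap\plt\subseteq s'\cap\plt$ --- the same two containments. This settles $\outcome=1$, and $\outcome=0$ is identical after the swap of $\plt$ and $\dfd$.

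There is no genuine obstacle here: the argument is pure bookkeeping. The only point that needs care is keeping track, for each of the four relevance assertions, of which of the two inclusions in Definition~\ref{def:hortyrelevance} is the ``contains at least the favouring factors'' part and which is the ``contains no more opposing factors'' part, and noticing that it is precisely the hypothesis $f(s')=\overline{\outcome}$ (the two cases are decided in opposite directions) that aligns the inclusions coming from $s\relevance_{F}s''$ and $s'\relevance_{F}s''$ so that they compose. Were $s$ and $s'$ decided in the same direction, this chaining would break, which is exactly why the proposition is restricted to oppositely decided cases.
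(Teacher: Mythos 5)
Your proof is correct and follows essentially the same route as the paper's: unfold Definition~\ref{def:hortyrelevance} for both hypotheses and chain the resulting inclusions by transitivity of $\subseteq$ to read off $s\relevance_{F}s'$ and $s'\relevance_{F}s$. The only cosmetic difference is that you specialize to $\outcome=1$ and appeal to symmetry, while the paper carries out the identical computation directly in the generic $\Facts^{\outcome}$/$\Facts^{\overline{\outcome}}$ notation.
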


\begin{proof}
 Let $s, s', s''\in S$, s.t. $f(s)=\outcome$, $f(s')=\overline{\outcome}$. Suppose $s\relevance_{F}s''$ and $s' \relevance_{F} s''$.  Then we have both 
\begin{align*}
&s \cap \Facts^{\outcome} \subseteq s''\cap\Facts^{\outcome} \text{ and }
s'' \cap \Facts^{\overline{\outcome}} \subseteq s\cap\Facts^{\overline{\outcome}}\\
&s '\cap \Facts^{\overline{\outcome}} \subseteq s^{''}\cap\Facts^{\overline{\outcome}} \text{ and }
s'' \cap \Facts^{\outcome} \subseteq s'\cap\Facts^{\outcome}
\end{align*}
Hence, 
\begin{align*}
&s '\cap \Facts^{\overline{\outcome}} \subseteq s\cap\Facts^{\overline{\outcome}} \text{ and } s \cap \Facts^{\outcome} \subseteq s'\cap\Facts^{\outcome}
\end{align*}

\noindent So  $s\relevance_{F}s'$ and $s'\relevance_{F}s$. 
\end{proof}

\section{Precedents, Binding Precedents, Overruled and \emph{Per Incuriam} Precedents} \label{sec:incuriam}
Based on the relevance relation  we  define  the notion of supporting precedent:  $s$ is a supporting precedent, or simply a precedent, for $s'$  if  $s$ is relevant for $s'$ and   decided before $s'$. In this section, for $s,s',s''\in S$, we  denote their courts $c,c',c''$,  (e.g. $c\in s\cap \Courts$). 
\begin{definition}[Supporting precedent]\label{def:support}
 Let $s,s'\in S$ and $\outcome\in \{0,1\}$. $s$ is a (supporting) precedent for $s'$ in the direction of $\outcome$, noted $\Pi(s,s',\outcome)$,  iff $f(s)=\outcome,  s \in \relevance( s' ) \text{ and } s<_{T}s'$.
\end{definition}

Henceforth, we will focus  on  binding precedents. Binding precedents for a state  $s'$ decided by  court  $\court'$ are those   precedents issued by a court $\court$ that has binding power on $\court'$.
\begin{definition}[Binding precedent]\label{def:bindingprec}
Let $s, s'\in S$ and $\outcome\in \{0,1\}$.
$s$ is a binding precedent for $s'$, for a decision as $\outcome$, noted $\beta (s,s',\outcome)$, iff $\Pi(s,s',\outcome) \text{ and } \relBinding(\court, \court')$. We simply write $\beta (s,s')$ iff there is $\outcome\in \{0,1\}$ s.t. $\beta (s,s',\outcome)$. 
\end{definition}
As discussed before, we are considering legal systems in which vertical stare decisis applies  and  horizontal stare decisis may apply. From (\ref{eq:SD}), we have the following.

\begin{proposition}\label{prop2}
Let $s, s'\in S$. If $\beta(s,s')$  then $\relHierarchy(\court, \court')$ or $c=c'$. 
\end{proposition}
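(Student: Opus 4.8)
The statement is an immediate consequence of the standing assumption that we work with common-law organisations, for which relation \eqref{eq:SD} holds, namely $\relHierarchy \subseteq \relBinding \subseteq \relHierarchy \cup \mathcal{I}$. The plan is simply to unpack the definition of a binding precedent and then read off the conclusion from the right-hand inclusion.

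First I would recall that, since $s,s'\in S$ and $S$ satisfies the condition that each state contains exactly one court, the courts $\court$ and $\court'$ with $\court\in s\cap\Courts$ and $\court'\in s'\cap\Courts$ are well defined, so the claim is meaningful. Next, from the hypothesis $\beta(s,s')$ and Definition~\ref{def:bindingprec} there is some $\outcome\in\{0,1\}$ with $\beta(s,s',\outcome)$, and by that definition this entails $\relBinding(\court,\court')$ (together with $\Pi(s,s',\outcome)$, which we do not need here).

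Then I would invoke \eqref{eq:SD}: since we are working within a common-law system, $\relBinding\subseteq \relHierarchy\cup\mathcal{I}$, where $\mathcal{I}$ is the identity relation on $\Courts$. Hence $(\court,\court')\in\relBinding$ gives $(\court,\court')\in\relHierarchy\cup\mathcal{I}$, i.e. either $\relHierarchy(\court,\court')$ or $(\court,\court')\in\mathcal{I}$, the latter meaning $\court=\court'$. This is exactly the disjunction in the statement, so the proof is complete.

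There is essentially no hard step: the only thing to be careful about is that \eqref{eq:SD} is not a theorem about arbitrary organisations but a property we have imposed (via the three \emph{stare decisis} conditions) on the common-law organisations under consideration; so the proof should explicitly flag that it uses this restriction, rather than Definition~\ref{def:organisation} alone.
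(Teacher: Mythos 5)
Your proof is correct and follows exactly the same route as the paper's: extract $\relBinding(\court,\court')$ from Definition~\ref{def:bindingprec} and apply the inclusion $\relBinding\subseteq\relHierarchy\cup\mathcal{I}$ from \eqref{eq:SD}. Your extra remarks (well-definedness of the courts, and that \eqref{eq:SD} is a standing common-law assumption rather than part of Definition~\ref{def:organisation}) are accurate but not a different argument.
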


\begin{proof}
    Since  $\beta(s,s')$  then  $\relBinding(\court, \court')$. By $(SD)$,  $\relHierarchy(\court, \court')$ or $c=c'$
\end{proof}

\begin{example} [Ex. \ref{runningEX1} continued]\label{runningEX2}
Recall  $\relevance_F(s^{*})= \{s_{1}, s_{4}, s_{5} \}$. For $s\in \relevance_F(s^{*})$,  $s<_{T}s^{*}$, so  every $s$ is a precedent for $s^{*}$ ($\Pi(s,s^{*},f(s)$), and is binding  for $s^{*}$, ($\beta(s,s^{*},f(s)$), since it assessed by a higher court than $s^{*}$. Also,  $s_1, s_2\in \relevance_F(s_{s_3})$ are  precedents for $s_{3}$, but only $s_1$ is binding  ($\beta(s_1, s_3)$) , ($s_2$ is decided by a lower court). Also, $\beta(s_3, s_4)$  and $\beta(s_4, s_5)$.  
\end{example}
As mentioned in the previous section, not all courts can overrule previous decisions. We assumed that $c'$ has the power to overrule a  decision made by $c$, in deciding a case $s'$ if: 1) $c'$ is hierarchically higher than $c$, or if, 2) $c'=c$ and in the decision of $s'$  the conditions for $c$ to overturn  its own precedents are met, namely if $l \in s$. With conditions 1) and 2) we establish that a lower court cannot overrule a higher court.\footnote{I.e. no anticipatory overruling is permitted. By “ anticipatory overruling of a precedent we refer to a lower court refusing to follow a precedent in anticipation of the likelihood that a higher court will overrule it" \cite{interpreting}.}

\begin{definition}[Overruling power]\label{def:overrulingpower}
Let $s'\in S$. Court $c'$ has the power to overrule (a decision  by) court $c$, when deciding $s'$, noted $O (c', c\mid s' )$, iff $\relHierarchy (c',c) \text{ or } (c'= c \text{ and }  l\in s)$.
\end{definition}
We can now define the notion of  overruled case.  Intuitively, a case $s'$ overrules a precedent $s$, when $s'$ is decided by $c'$ in the opposite direction wrt. to $s$ and  $c'$ has overruling power over $c$, when assessing $s'$ (i.e. $O (c', c\mid s' )$). 
\begin{definition}[Overruled case/state]\label{def: overruledcase} 
Let $s, s'\in S$. $s'$ overrules $s$, noted  $O(s',s)$, iff  \;$\Pi(s,s',\outcome)  , f(s')= \overline{\outcome}, \text{ and } O(c',c \mid  s').$
We write $Overruled_{T}(s,s')$, to mean that $s$ was overruled before $s'$ is assessed iff there is  $s''\in S$, s.t. $O(s'',s)$ and $s''<_{T} s'$.

\end{definition}
As consequence of Def. \ref{def:overrulingpower} a later case $s'$  overruled  $s$ if: it was decided by a higher court or it was decided by the same court and $l\in s'$. Hence, if a state $s'$ overrules a binding precedent $s$, necessarily  it is decided by the same court and $l\in s'$. 

\begin{proposition}
    Let $s,s'\in S$. If $\beta(s,s')$ and $O(s',s)$ then $c=c'$ and $l\in s'$. 
\end{proposition}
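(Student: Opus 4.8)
The plan is to prove this by a short case analysis on whether $c = c'$, using nothing more than the order-theoretic properties of $\relHierarchy$ recorded in Definition \ref{def:organisation} (transitivity and irreflexivity), together with the structural facts the two hypotheses unpack into.

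First I would unpack the hypotheses. From $\beta(s,s')$ and Proposition \ref{prop2} we obtain that $\relHierarchy(c,c')$ or $c = c'$. From $O(s',s)$ and Definition \ref{def: overruledcase} we obtain $O(c',c \mid s')$, and then Definition \ref{def:overrulingpower} tells us that $\relHierarchy(c',c)$, or else $c' = c$ and $l \in s'$. Next I would rule out the possibility $c \neq c'$: under that assumption the disjunction from $\beta$ forces $\relHierarchy(c,c')$, while the disjunction from $O(c',c\mid s')$ forces $\relHierarchy(c',c)$ (its other disjunct requires $c'=c$). Transitivity of $\relHierarchy$ then gives $\relHierarchy(c,c)$, contradicting irreflexivity. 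Hence $c = c'$.

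Finally, with $c = c'$ in hand, I would revisit the disjunction coming from $O(c',c\mid s')$: its disjunct $\relHierarchy(c',c)$ now reads $\relHierarchy(c,c)$, again excluded by irreflexivity, so the remaining disjunct must hold, namely $l \in s'$. Combining the two conclusions yields $c = c'$ and $l \in s'$, as claimed. I do not expect a genuine obstacle here; the only point requiring care is to apply irreflexivity in exactly the form fixed in the footnote to Definition \ref{def:organisation} (no court is $\relHierarchy$-related to itself), since this is what simultaneously excludes a strictly higher court overruling a \emph{binding} precedent and excludes the hierarchical clause of overruling when the precedent is self-binding — leaving the self-overruling clause with its parameter $l$ as the only way $O(s',s)$ can coexist with $\beta(s,s')$.
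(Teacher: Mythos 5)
Your proposal is correct and follows essentially the same route as the paper: apply Proposition \ref{prop2} to $\beta(s,s')$, unpack $O(s',s)$ into $O(c',c\mid s')$, and observe that the only jointly consistent disjuncts are $c=c'$ and $l\in s'$. The paper leaves the exclusion of $\relHierarchy(c',c)$ implicit, whereas you spell it out via transitivity and irreflexivity; this is just a more explicit rendering of the same argument.
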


\begin{proof}
Suppose  $\beta(s,s')$ and $O(s',s)$. Then, by   $\beta(s,s')$ and  Prop.  \ref{prop2} it follows that  $\relHierarchy(c,c')$ or $c=c'$. From  $O(s',s)$, it follows that $\relHierarchy(c',c)$ or $c=c'$ and $l\in s'$. Hence it must be  $c=c'$ and $l\in s'$.

\end{proof}

 Overruled cases are no longer binding for subsequent decisions. In this, overruling constitutes an exception to binding precedents. We can therefore start filtering the binding cases for $s$, eliminating the cases that were overruled at the time $s$ was decided. In doing so we  also check whether $l\in s$: if a court can overrule itself in deciding $s$ (i.e. $l\in s$)  then it is not  bound by its own precedents. In this situation we will remove the  precedents issued from the same court from the set of binding precedents. 
 
The set of  binding precedents not overruled when $s$ was decided  is \vspace{-5pt}
\begin{equation*}
\tilde{\beta}_{s} =\begin{cases}
  \{s' \mid \beta(s',s) \text{ and not }  Overruled_{T}(s', s)\} &\text{ if } l\not\in  s;\\ 
 \{s' \mid  s\cap \Courts \neq s' \cap \Courts,  \beta(s',s) \text{ and not }  Overruled_{T}(s', s) \} &\text{ otherwise. } 
    \end{cases}\\[-5pt]
\end{equation*}
 Clearly, $s$ has no power to overrule any precedent in $\tilde{\beta}_{s}$.  
 \begin{proposition}
 If  $s'\in \tilde{\beta}_{s}$, then not $O(c,c'\mid s)$.    
 \end{proposition}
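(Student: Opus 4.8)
The plan is to unfold Definition~\ref{def:overrulingpower}: instantiated with deciding state $s$, overruling court $c$ and target court $c'$, the assertion ``not $O(c,c'\mid s)$'' means precisely that neither $\relHierarchy(c,c')$ nor the conjunction ``$c=c'$ and $l\in s$'' holds. I would therefore split on whether $l\in s$ --- the very case distinction appearing in the definition of $\tilde\beta_s$ --- and in both branches use that $s'\in\tilde\beta_s$ entails $\beta(s',s)$, so that Proposition~\ref{prop2} supplies $\relHierarchy(c',c)$ or $c=c'$. Before that I would record the elementary observation that $\relHierarchy$ is asymmetric: being transitive and irreflexive (Definition~\ref{def:organisation}), $\relHierarchy(c,c')$ and $\relHierarchy(c',c)$ together would yield $\relHierarchy(c,c)$, contradicting irreflexivity; hence $\relHierarchy(c',c)$ implies both $c\neq c'$ and not $\relHierarchy(c,c')$.

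With this in hand the case analysis is quick. If $l\notin s$, then Proposition~\ref{prop2} gives either $\relHierarchy(c',c)$, and then asymmetry yields not $\relHierarchy(c,c')$ together with $c\neq c'$, so both disjuncts of $O(c,c'\mid s)$ fail; or $c=c'$, and then irreflexivity rules out $\relHierarchy(c,c')$ while the conjunction ``$c=c'$ and $l\in s$'' fails because $l\notin s$. If instead $l\in s$, then $s'$ satisfies the \emph{second} clause in the definition of $\tilde\beta_s$, so additionally $s\cap\Courts\neq s'\cap\Courts$; since every state carries a unique court (Definition~\ref{def:classifiermodel}) this means $c\neq c'$, Proposition~\ref{prop2} then forces $\relHierarchy(c',c)$, asymmetry gives not $\relHierarchy(c,c')$, and ``$c=c'$ and $l\in s$'' fails since $c\neq c'$. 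Either way $O(c,c'\mid s)$ does not hold.

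I do not expect any real obstacle here: the argument is essentially bookkeeping over the definition of $\tilde\beta_s$ together with asymmetry of $\relHierarchy$. The one spot deserving attention is the sub-case $l\notin s$ with $c=c'$, where Proposition~\ref{prop2} genuinely leaves $c=c'$ open and the conclusion must be drawn from the hypothesis $l\notin s$ rather than from a hierarchical fact; and one should not forget to invoke the uniqueness of the court in a state to turn $s\cap\Courts\neq s'\cap\Courts$ into $c\neq c'$ in the second case.
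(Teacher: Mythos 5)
Your argument is correct and relies on exactly the same ingredients as the paper's proof: Proposition~\ref{prop2} applied to $\beta(s',s)$, the irreflexivity/asymmetry of $\relHierarchy$ to exclude $\relHierarchy(c,c')$, and the second clause in the definition of $\tilde\beta_s$ to exclude ``$c=c'$ and $l\in s$''. The paper merely packages this as a short argument by contradiction (assuming $O(c,c'\mid s)$ and showing the only surviving disjunct, $c=c'$ with $l\in s$, is barred by the definition of $\tilde\beta_s$), whereas you run the same bookkeeping directly with a case split on $l\in s$; the content is the same.
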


  \begin{proof}
    
Since $s'\in \tilde{\beta}_{s}$, then $\beta(s',s, f(s'))$ and so $\Pi(s',s, f(s'))$. Moreover, by Proposition  \ref{prop2},     $\relHierarchy(c',c)$ or $c=c'$. Suppose by contradiction that $O(c,c'\mid s)$, then either $\relHierarchy(c,c')$ or $c=c'$ and $l\in s$. The only possibility is that  $c=c'$ and $l\in s$. But then, by definition of $\tilde{\beta}_{s}$, $s'\not \in \tilde{\beta}_{s}$. Contradiction. 
\end{proof}

\begin{example}[Ex \ref{runningEX1} continued] \label{runningEX4}
$s_1$ and $s_2$ are precedents for $s_3$. $s_3$ overrules $s_2$, ($O(s_3, s_2)$), since $\relHierarchy(c_0,c_1)$,   $c_0\in s_3$  and $c_1\in s_2$ . $s_3$ overrules $s_1$:  $c_0\in s_1\cap s_3$  and  $l\in s_3$.  $s_3<_{T}s^{*}$, then $Overruled_{T}(s_1, s^{*})$.  Recall,  $s_{1}, s_{4}, s_{5}$ are binding for $s^{*}$. $s_{4}, s_{5}$ are not overruled ( $s_4$ is only precedent of $s_5$, but $l\not \in s_5$); then $\tilde{\beta}_{s^{*}}=\{s_4, s_5\}$. Also, $s_3$ is not overruled and  is binding for $s_4$, so $s_3\in \tilde{\beta}_{s_4}$.  $s_4\in \tilde{\beta}_{s_5}$  ($\beta(s_4,s_5)$, $s_4$ not overruled,  $c_1\in s_4\cap s_5$ and $l\not \in s_5$).
\end{example}

\noindent To some extent, we can say that an overruling case is a case that legitimately went  against a relevant precedent, in the sense that it was  decided in the opposite direction. Suppose now that there is $s'\in \tilde{\beta}_{s}$, s.t. $f(s')= \outcome\in\{0,1\}$ and $f(s)=\overline{\outcome}$. It would seem that $s$ went against a \emph{binding} precedent, illegitimately. The question is whether $s$ is to be considered  to be decided \emph{per incuriam}, that is in “ignorance of relevant binding authority" \cite{interpreting}. Broadly, if a precedent is decided \emph{per incuriam},  it loses its bindingness on later cases.
In modelling the notion of \emph{per incuriam}, we  take three interrelated aspects into account. 
\begin{varitemize}
\item[1)] Not all courts can   disregard a previous decision   taken \emph{per incuriam} \cite{LexisNexis}.
Specifically, we  assume that a lower court  cannot disregard a \emph{per incuriam}  precedent issued by a higher court, to which it nevertheless remains bound.\footnote{ In the UK case Cassell v Broome, it was affirmed that the Court of Appeal could not disregard a House of Lords decision, even though  \emph{per incuriam} \cite{interpreting}. In Canada, the same principle is usually adopted \cite{canada}.} Instead, we claim that a court may disregard a  \emph{per incuriam} binding precedent  if it was decided by the same court.\footnote{This is clearly stated for the UK  Court of Appeal in \emph{Young v. Bristol Aeroplane Co.} (1944).}  
\item[2)] We  infer from the model which cases are decided \emph{per incuriam}; such  information is not already given in the case base. Coming back to our question, suppose $s$  went  against a  $s'\in \tilde{\beta}_{s}$. Is this enough to say that $s$ was decided \emph{per incuriam}? It depends. 
If $s'$ was decided by a higher court, then $s$ was in any case \emph{per incuriam}, since $s$ could no disregard it.  Suppose instead $s$ and $s'$ were decided by the same court, then in deciding $s$, the court could have found that $s'$ went itself   against a binding precedent $s''\in \tilde{\beta}_{s'}$.   But to be sure that $s'$ was \emph{per incuriam}, and so not  actually binding for $s$, we have to check whether $s''$ was \emph{per incuriam} or not,  and so on. In this sense, for determining whether $s$ was \emph{per incuriam} we have to check each sequence of binding precedents that originates from $s$. 
\item[3)] The third aspect concerns against which precedents $s$ must have gone to be said decided \emph{per incuriam}. Namely, for a given case $s$ there could be conflicting binding  precedents, i.e.  precedents in $\tilde{\beta}_{s}$ according to which $s$ should be decided in opposite directions (both as 0 and as 1).   In such a situation, we argue that firstly the precedents \emph{per incuriam} in $\tilde{\beta}_{s}$ issued by the same court of $s$ must be ignored.  Among the remaining precedents, in line with a principle for the resolution of conflicts between precedents, to which we  return later, $s$ should have followed precedents decided later by the higher court. We informally refer to such precedents here as the best temporal and hierarchical binding precedents (bthbp) for $s$. 
Moreover, we highlight that, according to our temporal preorder, there may be simultaneous and conflicting bthbp for $s$. 
 In this case $s$ was in a situation of ‘genuine conflict’: any decision made by $s$ would have gone against a bthp. We cannot then say that $s$ was decided \emph{per incuriam}. Ultimately, $s$ can be said to have decided \emph{per incuriam} if it went  against a bthbp, in the absence of a ‘genuine conflict’. 
 \end{varitemize}
We want to capture all these insights in the definition of \emph{per incuriam}. To do so, we must first define the set of best hierarchical binding precedents in $\tilde{\beta}_{s}$. 

\begin{definition}
Let $\tilde{S}\subseteq S $, ${\sf Best}_H(\tilde{S})= \{s\in\tilde{S}\mid \not \exists s'\in \tilde{S} \text{ s.t } \relHierarchy(c',c)\}$.
 Let $s\in S$, the set of the  best  hierarchical precedents not overruled when $s$ was decided  is $Best_{H}(\tilde{\beta}_{s})$. 
 \end{definition}

To determine whether $s$ was decided \emph{per incuriam}, we first recursively generate the following graph $G=(V,E)$. At step 0 of the recursion, the graph ($G_{0}=(V_{0}, E_{0})$) has only one node $s$ ($V_{0}=\{s\}$) and no edge ($E_{0}=\emptyset$). At step 1, we add to the nodes  the states $s'$, that are the best hierarchical binding precedents for $s$ ($V_{1}= \{s\}\cup \{s\mid s'\in  Best_{H}(\tilde{\beta}_{s})\}$). Edges joining  $s$ to each $s'$ are added ($E_{1}=\{(s, s')\mid s'\in  Best_{H}(\tilde{\beta}_{s})\}$). In step 2, we reiterate the procedure  for each node added at step $1$.  The procedure is finite, the set of states $S$ being finite. 
Note  that a state is never a binding precedent for itself, so that there are no loops in the graph $G$. Now, we  compute whether $s$ is \emph{per incuriam}, by recursively exploring the graph $G$. We  claim that $s$ is \emph{per incuriam} if: 1) there is $s'$ ,  adjacent node to $s$ ($(s,s')\in E$), such that $s$ was decided in the opposite way ($f(s)\neq f(s')$), and such that if $s'$ was decided \emph{per incuriam} then it was decided by a higher court  than $s$ ($Incuriam (s') \Rightarrow  \relHierarchy(c',c)$); 2) there is no $s''$, adjacent node to $s$ ($(s,s'')\in E$), decided in the same direction as $s$ ($f(s)=f(s'')$),  at the same time or after $s'$ ($s'\leq_{T} s'' $), such that if $s''$ was decided \emph{per incuriam} then it was decided by a higher court ($Incuriam (s'') \Rightarrow  \relHierarchy(c'',c)$).

\begin{definition} [Per incuriam]
Let $s\in S$, with $f(s)=\outcome \in \{0,1\}$. Let $G= (V, E)= \bigcup_{n\geq 0} G_{n}$,  the  recursively defined oriented graph, where,  \vspace{-5pt}
\begin{itemize}
\item $G_{0}= (V_{0}, E_{0})$, where $V_{0}=\{s\}$ and $E_{0}= \emptyset$;

\item $G_{n+1}= (V_{n+1}, E_{n+1})$, where $V_{n+1}=V_{n}\cup \{s'\mid s'\in  Best_{H}(\tilde{\beta}_{s''}), \text{ with } s'' \in  V_{n}  \}$ and $E_{n+1}=E_{n}\cup \{(s'',s')\mid  s'\in  Best_{H}(\tilde{\beta}_{s''}), \text{ with } s'' \in  V_{n}\}.$\vspace{-5pt}
\end{itemize}

 $s$ was decided per incuriam, noted $Incuriam (s)$, iff\vspace{-5pt}
\begin{itemize}
\item $\exists  s'$, s.t. $(s,s')\in E$ and $f(s)\neq f(s'),$ and ($Incuriam (s') \Rightarrow  \relHierarchy(c',c)$ ) and 
\item  $\nexists  s''$ s.t $(s,s'')\in E$ and $f(s)=f(s'')$, $s'\leq_{T} s''$ and ($Incuriam (s'') \Rightarrow  \relHierarchy(c'',c)$).
\end{itemize}
\end{definition}

\begin{example}[Ex. \ref{runningEX1} continued]
Recall $s_3\in \tilde{\beta}_{s_4}$, $f(s_{3})\neq f(s_{4})$.  $s_3$ was decided by $c_0$ a higher court than $c_1\in s_4$, so $Incuriam(s_{4})$. Recall $s_4\in \tilde{\beta}_{s_5}$.  $f(s_{4})\neq f(s_{5})$. But, $Incuriam(s_{4})$ and $s_4$ and ${s_5}$  are decided by the same court. Hence, $s_{5}$ is not per incuriam.
\end{example}

We can now define for each  $s$, the set of binding precedents without exception, $\beta_{s}$. We exclude  from the set  $\tilde{\beta}_{s}$ the precedents decided  \emph{per incuriam}  \emph{by the same court}. 

\begin{definition}[Binding precedents without exception]
 The set of  binding precedents without exceptions  for $s$ is  $\beta_{s}= \tilde{\beta}_{s}\setminus \{ s'\in \tilde{\beta}_{s}\mid  s'\cap \Courts = s \cap \Courts \text{ and }  Incuriam(s') \}$.
  \end{definition}

\begin{example} [Ex. \ref{runningEX1} continued]\label{ex:incuriam}
 Recall $\tilde{\beta}_{s^{*}}= \{s_4, s_5\}$ and  $Incuriam(s_4)$. $c_1 \in s_4$ is higher than $c_2\in s^{*}$. Then  $\beta_{s^{*}}=\tilde{\beta}_{s^{*}}$.  $s_4$ and $s_5$  have the same court. So, ${\sf Best}_H(\beta_{s^{*}})=\beta_{s^{*}}$. Note that  $\temporalorder$ is an order on  $\beta_{s^{*}}$, i.e.it is, besides transitive and reflexive, also  antisymmetric.
\end{example}

From the previous example we notice that in ${\sf Best}_H(\beta_{s})$ there may be  \emph{per incuriam} precedents decided by a higher court than  the one in $s$. Actually, we can verify that if the relevance relation is $\relevance_{F}$ (a fortiori reasoning based), the temporal relation is an order,  and in ${\sf Best}_H(\beta_{s})$ there are two cases decided in the opposite directions, then one of them is \emph{per incuriam}. The result is a consequence of the property in  Prop. \ref{HortyLeft}.

\begin{proposition}
    Let $C=(S,f,Org, <_{T}, \relevance_{F})$, a complete classifier, with $<_{T}$ a total order.  Let $s\in S$.  If $\exists s',s''\in {\sf Best}_H(\beta_{s})$, s.t. $f(s')\neq f(s'')$, then $Incuriam(s')$ or  $Incuriam(s'')$.
\end{proposition}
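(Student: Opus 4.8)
The plan is to isolate the structure forced by $s',s''\in{\sf Best}_H(\beta_s)$, reduce to a one‑variable statement about ``who went against a binding same‑court precedent last'', and close by induction on time. Throughout let $c',c''$ be the courts of $s',s''$ (paper's convention), and assume $\neg Incuriam(s')$ — otherwise there is nothing to prove — so the target becomes $Incuriam(s'')$.

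First I would show \emph{the two courts coincide}. Since $s',s''\in{\sf Best}_H(\beta_s)\subseteq\tilde{\beta}_s$, both are binding precedents of $s$, so Proposition~\ref{prop2} gives $\relHierarchy(c',c)\vee c'=c$ and $\relHierarchy(c'',c)\vee c''=c$, while $\relHierarchy$‑maximality in ${\sf Best}_H(\beta_s)$ (with the other state as witness) rules out $\relHierarchy(c',c'')$ and $\relHierarchy(c'',c')$. Irreflexivity and transitivity of $\relHierarchy$ discard the mixed possibilities, and in the case $\relHierarchy(c',c)\wedge\relHierarchy(c'',c)$ the tree‑likeness clause of Definition~\ref{def:organisation} forces $c'=c''$; the case $c'=c=c''$ is immediate. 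Next, since $f(s')\neq f(s'')$ and $\stricttemporalorder$ is a total order, without loss of generality $s'\stricttemporalorder s''$; put $f(s')=\outcome$ and $f(s'')=\overline{\outcome}$. Both states are relevant for $s$, so Proposition~\ref{HortyLeft} yields $s'\relevance_{F}s''$ and $s''\relevance_{F}s'$. Then I would check $l\notin s''$: if $l\in s''$, then $O(c'',c'\mid s'')$ holds by Definition~\ref{def:overrulingpower}, and with $\Pi(s',s'',\outcome)$ and $f(s'')=\overline{\outcome}$ this gives $O(s'',s')$ (Definition~\ref{def: overruledcase}); since $s''\stricttemporalorder s$ this makes $Overruled_T(s',s)$, so $s'\notin\tilde{\beta}_s$, a contradiction. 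Hence $l\notin s''$, so by completeness (Definition~\ref{def:completeClass}) $c''$ is self‑bound and $\beta(s',s'',\outcome)$ holds; and $s'$ is not overruled before $s''$ (any such overruler would precede $s$ and again give $s'\notin\tilde{\beta}_s$), so $s'\in\tilde{\beta}_{s''}$.

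It remains to show $Incuriam(s'')$. Because $\stricttemporalorder$ is a total order on the finite set $S$, the definition of $Incuriam$ unfolds to: a state $u$ is per incuriam iff the set of \emph{admissible} members of ${\sf Best}_H(\tilde{\beta}_u)$ — those $w$ with $Incuriam(w)\Rightarrow\relHierarchy(c_w,c_u)$ — is non‑empty and its $\stricttemporalorder$‑greatest element has outcome $\neq f(u)$ (totality is exactly what removes the ``genuine conflict'' escape, as no two distinct states are simultaneous). I would then prove, by strong induction on the temporal rank of $u$, the statement: \emph{if $u'\stricttemporalorder u$, $f(u')\neq f(u)$, $c_{u'}=c_u$, and $u'\in\tilde{\beta}_u$, then $Incuriam(u)$ or $Incuriam(u')$}; by the previous paragraph this applies to $(s',s'')$ and finishes the proof. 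The engine is a rigidity property of $\tilde{\beta}_u$: no member of $\tilde{\beta}_u$ can overrule another member (that other would be $Overruled_T$ before $u$), so — using Proposition~\ref{HortyLeft} to make opposite‑outcome members mutually relevant — of any two opposite‑outcome members of $\tilde{\beta}_u$ the earlier sits at a court $\relHierarchy$‑above or equal to the later's (and, when equal, the later one lacks $l$). By Proposition~\ref{prop2} and tree‑likeness all of ${\sf Best}_H(\tilde{\beta}_u)$ share one court $c^{*}$, either $c_u$ or strictly above. When $c^{*}=c_u$, the $\stricttemporalorder$‑greatest admissible member $w$ must satisfy $f(w)\neq f(u)$: otherwise $u'$ would be an earlier, same‑self‑bound‑court, opposite‑outcome member of $\tilde{\beta}_w$ not overruled before $w$ — a strictly smaller instance — so $Incuriam(w)$ by induction (as $\neg Incuriam(u')$), contradicting $w$'s admissibility.

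The main obstacle is the remaining case $c^{*}\relHierarchy c_u$, where $\tilde{\beta}_u$ contains strictly‑higher‑court precedents: now the greatest admissible member $w$ of ${\sf Best}_H(\tilde{\beta}_u)$ is admissible because of its court and may itself be per incuriam, so the naive inductive step above does not close, and one needs a stronger invariant. The fix is again Definition~\ref{def: overruledcase}: the ``earlier $=$ higher‑or‑equal court'' rigidity forces $w\stricttemporalorder u'$ in the bad configuration, which makes $w$ a strictly‑higher‑court opposite‑outcome member of $\tilde{\beta}_{u'}$, and then either this configuration is overruled away (propagating $Overruled_T$ into a contradiction, ultimately with $s'\in{\sf Best}_H(\beta_s)$) or it descends to a strictly smaller instance of the claim. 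Carrying along which of the current conflicting pair is forced per incuriam, and linearly ordering the higher courts via the tree property, is where the genuine work lies; the three reduction steps above are routine.
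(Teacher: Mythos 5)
Your first two paragraphs are essentially the paper's own argument, correctly reconstructed: the same-court reduction via ${\sf Best}_H$, Proposition~\ref{prop2} and tree-likeness, mutual relevance via Proposition~\ref{HortyLeft}, the exclusion of $l$ from the later case, $s'\in\tilde{\beta}_{s''}$, and then a temporal descent in which each ``blocking'' precedent is forced by the overruling/rigidity argument to sit at the same court $\tilde{c}$ without $l$, so that $s'$ remains one of its binding precedents and the descent terminates by finiteness. Your unfolding of $Incuriam$ under a total order (non-empty admissible set whose $<_T$-greatest element disagrees with the case) is correct, and your inductive treatment of the case where ${\sf Best}_H(\tilde{\beta}_u)$ sits at $c_u$ itself is a clean recasting of the paper's points 1)--19).

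The gap is exactly where you say the ``genuine work lies'': the case in which $\tilde{\beta}_u$ (or $\tilde{\beta}_{s''}$, or $\tilde{\beta}_{s_i}$ along the descent) contains precedents of strictly higher courts, so that $u'$ (resp.\ $s'$) is not an adjacent node of the incuriam graph and cannot serve as the existential witness in the definition. Your sketched fix does not close it: the rigidity argument forces $w<_T u'$ only for higher-court members $w$ with $f(w)\neq f(u')$; higher-court members agreeing with $u'$ are not so constrained, and if the $<_T$-latest admissible element of ${\sf Best}_H(\tilde{\beta}_u)$ agrees with $u$, then $u$ is \emph{not} per incuriam and you must instead establish $Incuriam(u')$ from $u'$'s own best hierarchical precedents, which may live at a different (even higher) court and do not form a ``strictly smaller instance'' of your lemma as stated; no stronger invariant is actually formulated. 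To be fair, you have put your finger on a real subtlety: the paper's proof silently bypasses this configuration, since its step 11) applies the definition of per incuriam with $s'$ as witness, which presupposes $s'\in{\sf Best}_H(\tilde{\beta}_{s''})$, whereas point 10) only yields $s'\in\tilde{\beta}_{s''}$, and membership of $s''$ in ${\sf Best}_H(\beta_{s})$ does not exclude higher-court members of $\tilde{\beta}_{s''}$ (they need not be relevant for $s$). But flagging the case is not proving it: as written, your proposal establishes no more than the configuration the paper treats and leaves the remaining case open, so it is not a complete proof.
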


\begin{proof}
   \begin{itemize}
   \item[1)] Assume by contradiction that not $Incuriam(s')$ and not $Incuriam(s'')$.
   \item[2)] Without loss of generality assume $s'<_{T} s''$, by that $<_{T}$ is a total order.
   \item[3)] $s'\in \relevance_{F}(s'')$. Indeed $s'\in \relevance_{F}(s)$, $s''\in \relevance_{F}(s)$, $f(s')\neq f(s'')$. Then by Prop. \ref{HortyLeft}, $s'\in \relevance_{F}(s'')$.
   \item[4)] not $Overruled_{T}(s',s)$, since $s'\in  \beta_{s}$.
   \item [5)] since $s',s''\in {\sf Best}_H(\beta_{s})$,  $s'$ and $s''$ are assessed by the same court, namely, there is $\tilde{c}=s'\cap \Courts = s''\cap  \Courts$.
   \item [6) ]Since, $s', s''\in  {\sf Best}_H(\beta_{s})$, $s',s''<_{T} s$. By point 2)  $s<_{T}s'<_{T} s''$.
   \item [7)]$\Pi(s',s'', f(s'))$, this follows from $s'\in \relevance_{F}(s'')$(by point 3) and $s'<_{T} s''$ (point 2)).  
   \item [8)]$l\not\in s''$. Indeed suppose $l\in s''$. Then  $O(\tilde{c}, \tilde{c}\mid s'')$. So by point 5) $O(s'',s')$. But then since $s''<_{T} s$,  $Overruled_{T}(s',s)$. Contradiction, because of point 4). Then $l\not \in s''$.
   \item [9)] $\relBinding(\tilde{c},\tilde{c})$. This follows from  point 8) and definition of complete classifier.
   \item [10)] $s'\in \tilde{\beta_{s''}}$. Indeed, by point 7) and point 9), $\beta(s',s'', f(s'))$. Moreover $\tilde{c}=s'\cap \Courts = s''\cap  \Courts$, $l\not \in s''$. Also not $Overulled_{T}(s',s'')$ (since  not $Overulled_{T}(s',s)$  by point 4), and $s''<_{T}s$). Hence, $s'\in \tilde{\beta_{s''}}$. 
   \item [11)] We know that $s'\in \tilde{\beta_{s''}}$ and $f(s')\neq f(s'')$. By 1), not $Incuriam(s')$  and not $Incuriam(s'')$. Then by definition of per incuriam, there must be $s_{1}\in {\sf Best}_H(\beta_{s''}) $, s.t. $s<_{T} s_{1}$, $f(s')=f(s'')$ and $Incuriam(s_{1})\Rightarrow \relHierarchy (c_{1}, \tilde{c})$ (where $c_1\in s_{1}\cap \Courts)$. 
   \item [12)] $\Pi(s',s_{1}, f(s_{1}))$. Indeed $s'\in  \relevance_{F}(s'')$ and $s_{1}\in  \relevance_{F}(s'')$ (since $s_{1}\in \beta_{s''} $), $f(s_{1})= f(s'') \neq f(s')$. Then by Prop.   \ref{HortyLeft},   $s'\in  \relevance_{F}(s_{1})$. Moreover $s'<_{T} s_{1}$. So, $\Pi(s',s_{1}, f(s_{1}))$.
   \item [13) ]  $\tilde{c}= c_{1}$ and $l\not\in s_{1}$. Indeed, since $ s_{1}\in {\sf Best}_H(\beta_{s''}) $, then $\relHierarchy( c_{1}, \tilde{c})$ or $\tilde{c}= c_{1}$.  Suppose $\relHierarchy ( c_{1}, \tilde{c})$ or $\tilde{c}= c_{1}$ and $l\in s_{1}$. Then  $O(c_{1}, \tilde{c}\mid s_{1})$. So, by point 12), $O(s_{1},s')$. But then since $s_{1}<_{T} s$,  $Overruled_{T}(s',s)$. Contradiction, by point 4).  Then,  $\tilde{c}= c_{1}$ and $l\not\in s_{1}$.
     \item [14)] not $Incuriam(s_{1})$. Indeed by point 11) we know $Incuriam(s_{1})\Rightarrow \relHierarchy (c_{1}, \tilde{c})$ and by point 13) we know $\tilde{c}= c_{1}$. Hence  not $Incuriam(s_{1})$. 
     \item [15)] $s'\in \tilde{\beta}_{s_{1}}$. Indeed by points 9), 12)  13), $\beta(s',s_{1}, f(s'))$. Moreover $\tilde{c}=s'\cap \Courts = s_{1}\cap  \Courts$, $l\not \in s_{1}$ (point 13)). Also not $Overulled_{T}(s',s_{1})$ (since  not $Overulled_{T}(s',s)$   by point 4), and $s_{1}<_{T}s$). Hence, $s'\in \tilde{\beta}_{s_{1}}$.
     \item [16)]  We know that $s'\in \tilde{\beta}_{s_{1}}$ and $f(s')\neq f(s_{1})$. By 1) and 14),  we know not $Incuriam(s')$ and not $Incuriam(s_{1})$. Then, by definition of per incuriam, there must be $s_{2}\in {\sf Best}_H(\beta_{s_{1}}) $, s.t. $s'<_{T} s_{2}$ and $f(s_{1})=f(s_{2})$ and $Incuriam(s_{2})\Rightarrow \relHierarchy (c_{2}, \tilde{c})$ (where $c_2\in s_{2}\cap \Courts)$. 
     \item [17)] By reiterating the procedure in 12)-15) we obtain $c_{2}=\tilde{c}$, $l\not \in s_{2}$, $s'\in \tilde{\beta_{s_{2}}}$ and  $f(s_{2})\neq f(s')$.
     \item[18)] We can continue recursively, obtaining a sequence $s_{1}, s_{2}, s_{3}...s_{i}.. $ such that   not $Incuriam(s_{i-1})$ $\Rightarrow$  $\exists s_{i}$ s.t $s'<_{T}s_{i}<_{T}s_{i-1}<_{T}\cdots<_{T}s''<_{T}s$, $c_{i}=\tilde{c}$, $l\not\in s_{i}$, $s'\in  \tilde{\beta_{s_{i}}}$, $f(s_{i})\neq f(s')$ and not $Incuriam(s_{i})$. 
     \item [19)] Since the set of states $S$ is finite, we will find a $s_{n}$ in the sequence s.t. $s'<_{T}s_{n}$  and there is no $s_{n+1}$ in the sequence s.t $s'<_{T} s_{n+1}<s_{n}$. Hence $Incuriam(s_{n})$. But then $Incuriam (s_{n-1})$...$Incuriam (s_{1})$. Hence by point 11) $Incuriam(s'')$. Contradiction because of 1). Hence the thesis. 
   
    \end{itemize}
\end{proof}

\section{ A Legal Principle for Conflict Situations } \label{sec:principle}
In  the previous section, we stated that when there are conflicting binding precedents, the case at hand should follow the best temporal hierarchical binding precedents, namely the  the most recent cases decided by the highest court. The reason for this is that political, economic or social changes may affect a court's approach to a precedent \cite{interpreting}.  Such a principle is adopted, for example,  by courts in the United States \cite{interpreting, Broughton}. We refer to this principle as the  \emph{temporal hierarchical principle}.  In this section we formally define a decision making process for a new case $s^{*}$ ($f(s^{*})=?$) based on the temporal hierarchical principle. We verify when such a decision process is unambiguous.

\begin{definition} [Decision making process/function]
Let $s^{*}\in S$. A \emph{decision making process/function for $s^{*}$,  is any function $f^*: S \to 2^{\{0,1\}}$  defined as follows:\vspace{-5pt}
\begin{equation*}
f^{*}(s) =
        \begin{cases}
\{f(s)\}  &\text{ if } s\neq s^{*};\\ 
V \in 2^{\{0,1\}} &\text{ otherwise. }  
    \end{cases}\\[-10pt]
\end{equation*}}

\end{definition}
 So, a decision making process is a function that: to each state $s$ other than $s^{*}$ assigns the singleton given by the evaluation of the classification function in $s$; to $s^{*}$ assigns a subset $V$  of $\{0,1\}$(possibly empty). 
On the basis of the  cardinality of $V$  we determine whether the decision process   is unambiguous.  

\begin{definition}Let $f^*: S \to 2^{\{0,1\}}$ be a decision making process. We say that: 1) no decision can be made for $s^{*}$ iff  $f^{*}(s^{*})=\emptyset$; 2)   $f^{*}$   is unambiguous for $s^{*}$ iff $|f^{*}(s^{*})| = 1$; 3) $f^{*}$   is ambiguous for $s^{*}$ iff $|f^{*}(s^{*})|>1$.
\end{definition}

\noindent Thus, the decision making function $f^{*}$  is ambiguous for $s^{*}$ iff $f^{*}(s^{*})= \{0,1\} $, i.e. when  $s^{*}$ can be decided both as $0$ and as $1$ (thus, we have a conflict situation).
A first approach to set a decision making process could rely on binding precedents. Let $s^{*}$  be as in Ex. \ref{ex:incuriam}  and suppose to define a decision making function associating to $s^{*}$ the values ($0$ or $1$) assumed by  its binding precedents in  $\beta_{s^{*}}$, i.e. s.t.  $f^{*}(s^{*})= \{f(s') \mid s' \in \beta_{s^{*}} \}$. So, since $s_{4}, s_{5}\in \beta_{s^{*}}$,   $f^{*}(s^{*})=\{0,1\}$. We obtain an ambiguous decision making process for $s^{*}$. 
We   now formalise a decision making process based on the \emph{Temporal Hierarchical Principle}.

\begin{definition} The set of  latest states in  $\tilde{S} \subseteq S$ is $\sf Best_{T}(\tilde{S})= \{s\in\tilde{S} \mid \forall s'\in  \tilde{S}: s'\temporalorder s\}$. The set of  best temporal hierarchical states in $\tilde{S}$ is ${\sf Best_{TH}}(\tilde{S})= {\sf Best}_{T}({\sf Best}_H(\tilde{S}))$. 
\end{definition}

\begin{definition}[Temporal Hierarchical Principle]
The decision making function $f_{2}^*: S \to 2^{\{0,1\}}$, based on binding precedents, according to the temporal hierarchical principle is \vspace{-10pt}
    \begin{equation*}
f_{2}^{*}(s) =
        \begin{cases}
\{f(s)\}  &\text{ if } s\neq s^{*}\\ 
\{f(s')\mid s'\in Best_{TH}(\beta_{s^{*}}) \} &\text{ otherwise. }
    \end{cases}\\[-10pt]
    \end{equation*}
    \end{definition}
 $f_{2}^{*}$  assigns to $s^{*}$ the set of the outcomes of the elements in $Best_{TH}(\beta_{s^{*}})$, which are the  latest binding precedents for $s^{*}$, without exceptions,  decided by the higher court.

\begin{example} [Ex. \ref{runningEX1} continued]
Recall that $ Best_{H}(\beta_{s^{*}})= \{s_4, s_5\}$. Notice that $s_{4}<_{T}s_{5}$, hence $ Best_{TH}(\beta_{s^{*}})= \{ s_5\}$. So  $f_{2}^{*}(s^{*})=\{f(s_{5})\}=\{0\}$.
\end{example}

As hinted in the previous example,   $f_{2}^{*}$ is an unambiguous decision making for $s^{*}$ when $\temporalorder$ is an  order, namely also antisymmetic. Indeed, from antisymmetry it follows that cases cannot be  synchronous and there is a unique latest case decided by the higher court. 
Differently stated, if the temporal relation  between cases is a preorder, there may be synchronous conflicting states in $ Best_{H}(\beta_{s^{*}})$ and thus $f_{2}^{*}$ can be ambiguous. 
\begin{proposition}
Suppose $\beta_{s^{*}}\neq \emptyset$.  If 
 $\temporalorder$ is a  total order (at least on $ Best_{H}(\beta_{s^{*}})$) then
 $f_{2}^{*}$ is unambiguous decision making for $s^{*}$. 
    \end{proposition}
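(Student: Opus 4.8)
The plan is to show that when $\temporalorder$ is a total order on $\mathsf{Best}_H(\beta_{s^*})$, the set $\mathsf{Best}_{TH}(\beta_{s^*}) = \mathsf{Best}_T(\mathsf{Best}_H(\beta_{s^*}))$ is a singleton, from which $|f_2^*(s^*)| = 1$ follows immediately. So the whole argument reduces to a statement about the latest elements of a finite, nonempty, totally ordered set.

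First I would observe that $\beta_{s^*} \neq \emptyset$ together with the fact that $S$ (hence $\beta_{s^*}$) is finite implies $\mathsf{Best}_H(\beta_{s^*}) \neq \emptyset$: in any finite nonempty set, since $\relHierarchy$ is irreflexive and transitive (so acyclic on a finite set), there is always at least one $\relHierarchy$-maximal element, i.e. an $s$ with no $s' \in \beta_{s^*}$ such that $\relHierarchy(c',c)$. Call $T := \mathsf{Best}_H(\beta_{s^*})$; it is finite and nonempty. Then I would show $\mathsf{Best}_T(T) = \{s \in T \mid \forall s' \in T: s' \temporalorder s\}$ is a nonempty singleton. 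Nonemptiness: since $\temporalorder$ is a total order on $T$, it is in particular a total preorder, and a finite nonempty totally preordered set has a maximum element (induction on cardinality, using transitivity and totality). Uniqueness: if $s_1, s_2 \in \mathsf{Best}_T(T)$, then $s_2 \temporalorder s_1$ (since $s_1$ is a latest element and $s_2 \in T$) and $s_1 \temporalorder s_2$ (symmetrically); antisymmetry of the total order then gives $s_1 = s_2$.

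Having established $\mathsf{Best}_{TH}(\beta_{s^*}) = \{\bar s\}$ for a unique $\bar s$, I would conclude: by definition of $f_2^*$, $f_2^*(s^*) = \{f(s') \mid s' \in \mathsf{Best}_{TH}(\beta_{s^*})\} = \{f(\bar s)\}$. Since $\bar s \in \beta_{s^*} \subseteq S$ is an assessed case — indeed every element of $\beta_{s^*}$ is a precedent, hence satisfies $f(s') = \outcome \in \{0,1\}$ by Definition of supporting/binding precedent — we have $f(\bar s) \in \{0,1\}$, so $|f_2^*(s^*)| = 1$, i.e. $f_2^*$ is unambiguous for $s^*$.

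The main obstacle, such as it is, is purely bookkeeping rather than conceptual: one must be careful that the parenthetical ``at least on $\mathsf{Best}_H(\beta_{s^*})$'' is exactly the hypothesis actually used — the argument never needs $\temporalorder$ to be antisymmetric globally, only on $T$ — and one must not accidentally appeal to antisymmetry of $\relHierarchy$ (which is false; $\relHierarchy$ is only irreflexive and transitive, so the ``tree-like'' clause is what prevents pathologies, though even that is not needed here, only finiteness and acyclicity for the existence of an $\relHierarchy$-maximal element). A secondary point to state cleanly is why a finite nonempty totally preordered set has a maximum; this is a one-line induction but should be mentioned so the reader sees where finiteness of $S$ (via Definition of the complete classifier, $S = \overline{2}^{\atm_0}$, which is finite since $\atm_0$ is finite) enters.
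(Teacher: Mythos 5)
Your proof is correct and takes essentially the same route as the paper's: both arguments reduce to the nonemptiness of ${\sf Best}_{TH}(\beta_{s^{*}})$ together with antisymmetry of $\temporalorder$ on ${\sf Best}_H(\beta_{s^{*}})$ excluding two distinct latest elements, the only difference being that you argue directly that ${\sf Best}_{TH}(\beta_{s^{*}})$ is a singleton while the paper derives a contradiction from the two failure cases $f_{2}^{*}(s^{*})=\emptyset$ and $|f_{2}^{*}(s^{*})|>1$. Your explicit justification of nonemptiness (finiteness plus $\relHierarchy$ being a strict partial order on courts, then finiteness plus totality of $\temporalorder$) fills in a step the paper merely asserts.
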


\begin{proof}
Suppose $\temporalorder$ is an order on $ Best_{H}(\beta_{s^{*}})$. Suppose $\beta_{s^{*}}\neq \emptyset$.  Suppose by contradiction that   $f_{2}^{*}$ is not unambiguous decision making for $s^{*}$. Then we can have two possibilities: 
\begin{itemize}
\item $f_{2}^{*}(s^{*})=\emptyset$; but since $\beta_{s^{*}}\neq \emptyset$ then $ Best_{TH}(\beta_{s^{*}})\neq \emptyset$. Namely, there is $s\in Best_{TH}(\beta_{s^{*}}) $ and so $f(s)\in  f_{2}^{*}(s^{*})$. Contradiction. 
\item $|f_{2}^{*}(s^{*})|>1$; then, there are $s, s'\in Best_{TH}(\beta_{s^{*}})$,  s.t. $f(s)\neq f(s')$ (and so $s\neq s'$).  Notice that since $s, s'\in Best_{TH}(\beta_{s^{*}})$, then   $s, s'\in Best_{H}(\beta_{s^{*}})$. Since $s\in Best_{TH}(\beta_{s^{*}})$, it must be $s\temporalorder s'$. Since $s\in Best_{TH}(\beta_{s^{*}})$ it  must be $s'\temporalorder s$. Contradiction since $\temporalorder$ is an order on $ Best_{H}(\beta_{s^{*}})$, so antisymmetric, and $s\neq_{T} s'$. 
\end{itemize}
\end{proof}

\section{Conclusion} \label{sec:conclusion}

This work further develops the connection, already identified in \cite{liu2022modelling}, between CBR models and classifiers models (CMs).  We have shown how  two founding elements of legal decision-making, namely   organization of courts and the temporal relationship between cases can be incorporated in CMs. We were then able to model the notion of precedent, binding precedent and binding precedents susceptible to exceptions: precedents decided \emph{per incuriam} and overruled ones. Finally,  we introduced a temporal and hierarchical principle for handling conflicting precedents and verified when this principle determines an unambiguous decision process. 
Some aspects deserve further discussion. 

We first briefly discuss the relationship between our work and \cite{Broughton}, which had already highlighted the need to consider temporal and hierarchical dimensions in precedent constraint models. As already amply emphasised, our work moves within the framework of CMs. In \cite{Broughton}  two different models of precedential constraint are combined, starting from the assumption that  the vertical precedent constraint is much tighter than the horizontal one; in this sense, even if not explicitly stated, the relevance relation between cases  depends on which constraint applies. We don't make such a distinction. More specifically, we do not impose any specific notion of relevance between cases, in this sense our work moves on a more abstract level. For us, the vertical constraint is stronger only in the approach to  \emph{per incuriam}, when we argue that  a  \emph{per incuriam} precedent cannot be ignored by a lower court. 
A further distinction pertains to  how violations of the vertical constraint are handled: in \cite{Broughton}, if a case  in the case base violated the vertical constraint, then a decision for a new case cannot be forced (as, in \cite{Horty2011RR}, the precedential constraint cannot force a new case if the starting case base is not consistent); for us a case decided ignoring a vertical precedent is \emph{per incuriam}  and can  simply be discarded. 
Also, differently from \cite{Broughton}, we do not assume that horizontal constraint always applies,  this depends on the binding relation   associated to the CM.
Finally, in \cite{Broughton} the temporal relation between the cases is a total order, for us  is a total preorder; indeed,  the temporal and hierarchical principle, also discussed in \cite{Broughton}, is not always sufficient for us to solve conflicts. 

A further discussion deserve the  relevance and binding relations. Relevance  is a “binary" notion: a case is either relevant to another or it is not, and if it is relevant it cannot be distinguished. In future work we aim to refine the relevance notion so that  some cases can be more relevant than others. Furthermore, we did not impose any properties on the relevance relation, so that two cases may be relevant for a new case but not for each other.  This will be important to develop a  framework where multiple issues or intermediate factors have to be addressed to decide a case (see, among others \cite{branting,capon21,CanovottoHierarchies,VanWoerkomHieararchy}), so that two cases may be relevant to a new case but not to each other because they concern different issues/intermediate factors.
Also, the notion of binding is  “binary": a court  issues  binding decisions or not. But, sometimes,  as suggested in  \cite{deonticAuth}, a binding relation   between two agents of an institution may be  context-dependent. It may happen, as is the case with the High Court in England, that a court binds only in specific situations.  In the future, we plan to consider a more sophisticated binding relation. 

Last but not least, in  \cite{LiuLoriniJLC} CMs were defined as models for the modal logic BCL. Our goal is to develop an extension of the BCL logic and to provide a completeness result with respect to the class of temporal and jurisdictional CMs. 

\section*{Acknowledgements}
 Huimin Dong is supported by the Chinese Ministry of Education project of the Key Research Institute of Humanities and Social Sciences at Universities (22JJD720021).
Antonino Rotolo was partially supported by the projects CN1 “National Centre for HPC, Big Data and Quantum Computing” (CUP: J33C22001170001) and PE01
“Future Artificial Intelligence Research” FAIR (CUP: J33C22002830006). 
 
\bibliographystyle{plain}
\bibliography{biblio}

\end{document}